\theoremstyle{plain}
\let\hat\widehat
\newtheorem{example}{{\bf Example}}
\numberwithin{example}{section}
\newtheorem{lemma}{{\bf Lemma}}
\newtheorem{theorem}{{\bf Theorem}}
\newtheorem{definition}{{\bf Definition}}
\newtheorem{remark}{{\bf Remark}}
\title{\Large{\textbf{
Online Regularization towards \\}}
\Large{\textbf{Always-Valid High-Dimensional Dynamic Pricing}}} 
 \author
 {
 Chi-Hua Wang \thanks{Department of Statistics, UCLA, CA, 90095. Email: tsubasa3002101@gmail.com.},
 Zhanyu Wang\thanks{Department of Statistics,  Purdue University, IN 47907. Email: wang4094@purdue.edu}, 
 Will Wei Sun\thanks{Daniels School of Business, Purdue University, IN 47907. Email: sun244@purdue.edu.},
 and
 Guang Cheng\thanks{Department of Statistics, UCLA, CA, 90095. Email:guangcheng@ucla.edu}
 }
\date{}
\begin{document} 

\maketitle

\begin{abstract}
\noindent
Devising a dynamic pricing policy with always valid online statistical learning procedures is an important and as yet unresolved problem. Most existing dynamic pricing policies, which focus on the faithfulness of adopted customer choice models, exhibit a limited capability for adapting to the online uncertainty of learned statistical models during the pricing process. 
In this paper, we propose a novel approach for designing a dynamic pricing policy based on regularized online statistical learning with theoretical guarantees. The new approach overcomes the challenge of continuous monitoring of the online Lasso procedure and possesses several appealing properties. 
In particular, we make the decisive observation that the always-validity of pricing decisions builds and thrives on the \textit{online regularization} scheme. Our proposed online regularization scheme equips the proposed optimistic online regularized maximum likelihood pricing (\texttt{OORMLP}) pricing policy with three major advantages: encode market noise knowledge into pricing process optimism; empower online statistical learning with always-validity overall decision points; envelop prediction error process with time-uniform non-asymptotic oracle inequalities.  This type of non-asymptotic inference results allows us to design more sample-efficient and robust dynamic pricing algorithms in practice. 
In theory, the proposed \texttt{OORMLP} algorithm exploits the sparsity structure of high-dimensional models and secures a logarithmic regret in a decision horizon.  These theoretical advances are made possible by proposing an optimistic online Lasso procedure that resolves dynamic pricing problems at the \textit{process} level, based on a novel use of non-asymptotic martingale concentration. In experiments, we evaluate \texttt{OORMLP} in different synthetic and real pricing problem settings and demonstrate that \texttt{OORMLP} advances the state-of-the-art methods.
\end{abstract}

\bigskip
\noindent{\bf Key Words:} bandit, dynamic pricing, martingale concentration, online lasso, time-uniform oracle inequality, regret analysis.

\newpage
\baselineskip=25pt 


\section{Introduction}
\label{sec:introduction}

With the growing availability and differentiation of digital products, modern online marketplaces present a unique challenge for dynamic pricing algorithms: they must customize pricing decisions for a diverse range of digital goods to the seller's customer database in an online environment. In response to such a unique challenge, \textit{online} training in modern dynamic pricing systems has increasingly included market knowledge and business insights, such as product features, marketing environment, and customer purchasing behavior. Indeed, dynamic pricing has been employed in a variety of services and businesses, including hospitality, tourism, entertainment, retail, energy, and public transportation \citep{boer2015surveys}, and has evolved into an integral part of revenue management in modern online service industries. 

A significant challenge of dynamic pricing in the modern digital economy is making customized pricing decisions for products, services, and solutions on the basis of item-level data. Besides, while most practical scenarios involve high-dimensional item-level data, only a small number of the observed features are typically decisive in the pricing decision process. In addition, high-dimensional dynamic pricing procedure has another layer of complexity: the entire pricing decision-making process is trained and learned from binary feedback. 
That is, pricing decision makers only observe and learn from the sale status for the price that was delivered, rather than learning from the true market value of the current item. To generate business insights on pricing mechanism, it is desirable to learn models that attributes to small number of decisive pricing factors to enhance explainablity of online learned market value model of products while maximizing the revenue.

Further, risk control of the online learned model on \textit{continuously monitor} dynamic pricing procedure is in emerging demand from industrial practice because the opportunity cost of lengthy pricing experiments is high and regrettable (\cite{johari2021always}). Indeed, it is desirable to detect the true product market value as quickly as possible or to abolish the running pricing experiment if the revenue improvement appears unpromising so that the scientist may test other available actions. Besides, optimizing the running time in advance is unfortunately impractical due to lacking knowledge on seeking revenue improvement and cost elasticity. In modern dynamic pricing practice, deployment of online statistical learning methodology turns out to be impeded by the such dynamic trade-off between maximum revenue improvement detection and minimum running time. Resolving such trade-off is a crucial advancement in statistical methodology for real-time data and persuades our investigation on the problem of \textit{continuous monitoring high-dimensional dynamic pricing problems}.

Continuous monitoring of high-dimensional dynamic pricing problem is a setting in which decision-makers seek to recover a sparse product market value model and maximize collected revenue (high-dimensional dynamic pricing), while the decision-makers are allowed to terminate the pricing algorithm whenever they wish, \textit{and} the result still maintains statistical validity (continuous monitoring). Such a setting arises naturally in industrial practice \citep{johari2021always} but remains challenging in the literature, preventing practitioners from effectively deploying high-dimensional statistical methodology effectively in modern online service industries.
Specifically, we consider a company that sells products to customers over a \textit{randomly stopped} time horizon. Each period, a new product is introduced, and the dynamic pricing algorithm is responsible for deciding its price. The pricing decision is based on the product feature and the historical pricing and sales data. Once the price is decided, the market either accepts or rejects the product, depending on whether the price is less than or more than the product's market value. The company has no idea what the market value of each product is, other than that it is a function in terms of the value of the product feature \citep{broder2012dynamic, keskin2014dynamic, javanmard2019dynamic}. 
Accordingly, the seller can utilize historical prices and sales data to infer market values for various product features and use those estimations to drive future pricing decisions. In general, one objective is to design a pricing algorithm that performs well in generating a small amount of worst-case regret.

Consequently, successful revenue management requires faithful product market value models and valid online statistical learning. Existing dynamic pricing studies focus on the faithfulness of adopted customer choice models \citep{myerson1981optimal, joskow2012dynamic, boer2015surveys, javanmard2019dynamic, mueller2019lowrank, nambiar2019dynamic, shah2019semi,ban2021personalized, javanmard2020multi}, but, unfortunately, this is insufficient: certain iterates within their online optimization process may violate pre-specified optimization constraints (for example, sparsity constraint) and thus deny the validity of ultimate pricing decisions. Such lack of validity haunts practitioners’ deployment of dynamic pricing systems and challenges scientists’ craftsmanship: \textit{how can one design an online regularization scheme to ensure the validity of online statistical learning uniformly among all decision points and secure low regret at the same time?} Specifically, we aim to deliver a regularization automation scheme based on learned-online market knowledge.

\subsection{Our contributions}

In this work, we make the decisive observation that the always-validity of pricing decisions builds on the \textit{online regularization} scheme. This insight is drawn from an elegant interplay between sparse online statistical learning and non-asymptotic martingale concentration, which is desirable to establish the always-validity of pricing decisions. Such interplay leads us to propose a novel online regularization scheme: we identify uncertainties surrounding learned product demand parameters and regularize them to ensure the feasibility of iterating over all decision points within the pre-specified confidence budget. In such a sense, a successful always-valid high-dimensional dynamic pricing algorithm design will always return valid pricing decisions with high probability. Hence, we regularize sparse online statistical learning by quantifying and offsetting uncertainties evolving within the estimation process.

We call this principle technical tool \textit{Optimistic Online LASSO}  (\texttt{OOLASSO}): a novel \textit{online regularization} scheme for online lasso. Based on it, we propose an optimistic online regularized maximum likelihood pricing (\texttt{OORMLP}) algorithm. The \texttt{OORMLP} enjoys three major advantages: encode market noise knowledge into pricing process optimism; empower online statistical learning with always-validity overall decision points; envelop estimation error process with time-uniform non-asymptotic concentration bounds. These properties ensure the validity and robustness of our algorithm in practical dynamic pricing problems. In theory, we establish (\texttt{OOLASSO}) a non-asymptotic time-uniform oracle inequality of our estimator. Such inequality is possible by our novel use of non-asymptotic martingale concentration inequalities \citep{maillard2019mathematics, howard2020time} 
to ensure the always-validity warranty under a user-specified confidence budget. Built upon this time-uniform oracle inequality, we further show that our \texttt{OORMLP} algorithm achieves a logarithm regret bound, which meets the information-theoretical lower bound in the literature (Theorem 5.1, \cite{javanmard2019dynamic}).
In the experiment, we evaluate the performance of \texttt{OORMLP} in both synthetic and real data set. The results back up our theoretical superiority of \texttt{OORMLP} algorithm in its robustness perspective against different demand uncertainties. Besides, we demonstrate how \texttt{OORMLP} utilizes the user-specified confidence budget into an online regularization scheme to trade off price exploration and exploitation to achieve a substantial regret reduction in finite time performance compared to \texttt{RMLP} (\cite{javanmard2019dynamic}).

In summary, our paper makes the following three major contributions. 
\begin{enumerate}[leftmargin=1mm, noitemsep]
\item Conceptually, we formulate the continuous monitoring of high-dimensional dynamic pricing problems. Our formulation bridges the high-dimension statistics literature in the Statistics community with continuous monitoring literature in the Operations Research community, opening a new venue for future studies on practical online statistical learning frameworks. 

\item Methodologically, we propose the \texttt{OORMLP} algorithm for continuous monitor high-dimensional dynamic pricing to ensure the pricing strategy is valid at any time. To our knowledge, this is the first high-dimensional dynamic pricing algorithm with an always-valid guarantee.

\item Theoretically, we establish time-uniform Lasso oracle inequalities on the estimation error process and further show a time-uniform logarithmic regret bound for our \texttt{OORMLP} algorithm. As a technical by-product, we develop \texttt{OOLASSO} to manage the optimism of online LASSO procedure via our novel use of non-asymptotic martingale concentration. 
\end{enumerate}


\subsection{Related literature}

Our work contributes to the learning-based dynamic pricing literature in problem formulation, to regularized online statistical learning in methodology, and to the growing literature of always valid online decision-making in theory. 

\textbf{Dynamic pricing with demand learning.}
Dynamic pricing with learning is a field of research that investigates pricing algorithms for situations when the demand function is unknown. 
Typically, the challenge is described as a form of the multiarmed bandit problem, with the arms being priced and the payoffs from the different arms being correlated, due to the measurements of demand assessed at different price points being correlated random variables. 
This includes parametric approaches \citep{broder2012dynamic, keskin2014dynamic, broder2012dynamic}, semi-parametric ones \citep{shah2019semi} as well as nonparametric ones \citep{fan2021policy, liu2022non, keskin2014dynamic}.
Beyond these studies, our work advances the problem formulation from finite to randomly stopped and possibly infinite horizon to meet the demand of continuous monitoring dynamic pricing in modern online service industrial practice. 



A more related line of work is contextual dynamic pricing, which can be categorized into three groups, with different emphasis on how the context plays roles in the price and products market demand or value. The first group of references \citep{qiang2016dynamic, nambiar2019dynamic, wang2021uncertainty} uses context $x$ as covariates of market demand. They assume the demand is observable and has a relationship with the offered price and the product context. In our work, we don't observe the demand but only the sale status of a product. The second group of references \citep{mao2018contextual, cohen2020feature} considers a noise-less contextual dynamic pricing, which captures the relationship between value and product context in a deterministic way. The third group of references \citep{javanmard2019dynamic, luo2021distribution, fan2022policy} considers a noisy linear valuation model, which is also the model used in our paper.




 


\textbf{Regularized online statistical learning.} 
In the past decade, regularized offline statistical learning methodology, including Ridge regression \citep{hoerl1970ridge} and Lasso regression \citep{tibshirani1996regression} and related high-dimensional literature \citep{buhlmann2011statistics, negahban2012unified, wainwright2019high}, have found their applications integral to the solution for various online machine learning task. The applications span across several different tasks including bandit algorithms design \citep{wang2020residual, wu2022residual}, online decision making \citep{bastani2020online, wang2020online, chen2021statistical_b, chen2021statistical_a, wang2022always}
and high-dimensional dynamic pricing \citep{javanmard2019dynamic, fan2021policy}. Indeed, these efforts inspired people to several proof concepts and elegant statistical frameworks for online machine learning tasks. However, the associated calibration scheme for regularization level in these prior efforts is typically designed for offline uncertainty (where the dataset is assumed given) but not online uncertainty (where the dataset is not given), leading to concerns about the validity of online-learned models and the consequent inference result. Beyond these studies, our work advances the methodology of regularized statistical learning from a constant level regularization for offline uncertainty to a process level regularization for online uncertainty, which we term \textit{online regularization}.

Such online regularization marks the key difference of our work compared to the \texttt{RMLP} in \cite{javanmard2019dynamic}, which also considered sparse learning in high-dimensional dynamic pricing. In practice, addressing the continuous monitoring high-dimensional dynamic pricing problems requires rethinking on the art of \texttt{RMLP} in the following three respects: (1) Rethink how to formulate the online uncertainty. In \texttt{RMLP}, the noise is assumed to be i.i.d, which does not capture the dependency nature between observations in the online setting. In contrast, we consider a martingale difference noise distribution, which is more suitable to quantify online uncertainty.
(2) Rethink the product feature sequence distribution. In \texttt{RMLP}, the product feature vectors are independently and identically sampled from a fixed distribution. In contrast, our framework allows a non-i.i.d. or general feature distribution. 
(3) Rethink the regularization level sequence. \texttt{RMLP} considers episode updates and requires resetting the algorithm. In contrast, our design of regularization does not need to reset the algorithm, hence is more sample efficient. Moreover, our regularization design mechanism also includes product context uncertainty and confidence budget to better balance the tradeoff between online uncertainty and online estimation error. 
See Remark \ref{rm:compare_RMLP} and Remark \ref{rm:alg_design} for more detailed comparisons of these two methods.

\textbf{Always-valid online decision making.} Always-valid online decision making is an emerging field of studies in the last half decade \citep{johari2015always, zhao2016adaptive, johari2021always}. Such emergence is a response of surging demand from modern online service industrial practice since the opportunity cost of lengthy online experiments is high and regrettable \citep{johari2021always}. Indeed, it is preferable to determine the real impact as fast as feasible or to terminate the ongoing experiment if the result looks unpromising, allowing the scientist to try other activities. Additionally, adjusting the runtime length in advance is unfeasible due to a lack of knowledge about the amount of the seeking impact and cost elasticity. Consequently, in modern online service practice, such dynamic trade-offs between greatest effect detection and shortest running time constrain the implementation of online statistical learning methodologies. Our work makes a first advance on the theory of always-valid online decision making into the high-dimensional dynamic pricing problems.

\section{High-dimensional dynamic pricing problems}\label{sec:2_hd_pricing}

This section defines the high-dimensional dynamic pricing problems. Section \ref{subsec:alg_design} provides a five-step general design and essential elements of dynamic pricing algorithms. Section \ref{sec:choice_model} provides our statistical framework for the market value of the product. Section \ref{subsec:pricing_function} provides our presumption on the implemented pricing function.

\subsection{A general design of dynamic pricing algorithms}
\label{subsec:alg_design}

\begin{figure}[h]
\centering
\includegraphics[width=\textwidth]{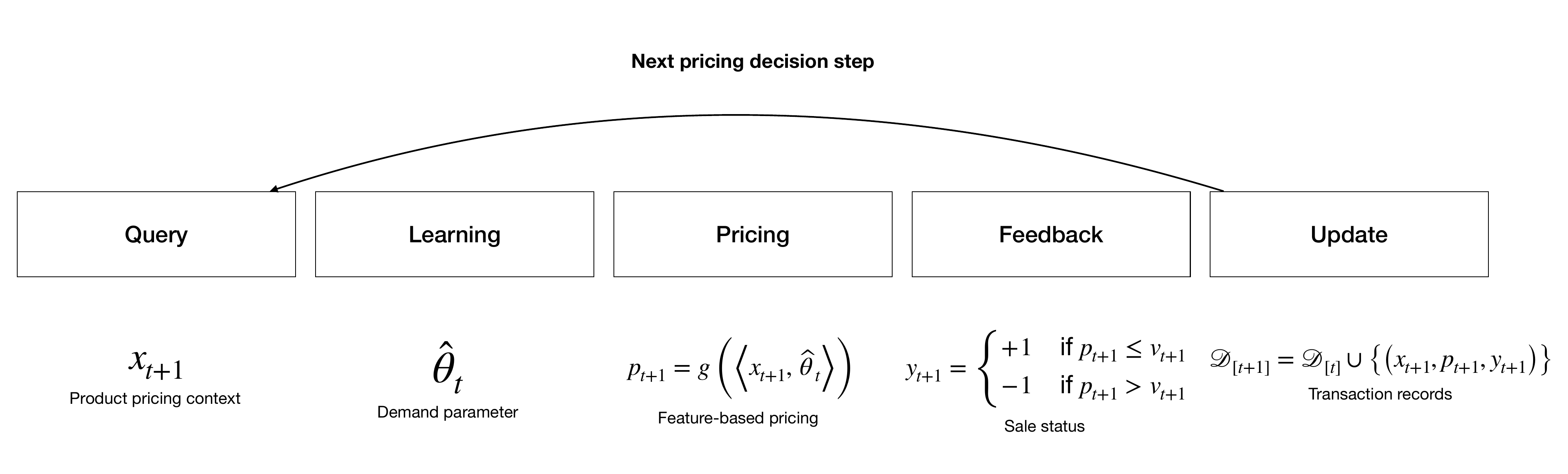}
\caption{A general design of dynamic pricing algorithms}
\label{fig:DP_design}
\end{figure}

In a dynamic pricing problem with decision horizon $T$, the agent is required to determine total $T$ prices at decision points $1,2, \cdots , T$. Here $T$ is an \textit{unknown integer-valued random variable} and its realization is determined by an unknown terminating rule from a decision maker.
At a decision point $t \in [T]$, a customer in the market selects a product with context $x_{t}$ from a $d$-dimensional unit sphere $\mathcal{X}=\{x\in \mathbb{R}^{d}:\|x\|_{\infty}\le 1 
\}$. The agent receives a pricing query for $x_{t}$, and her goal is to choose a posted price $p_{t} \in \mathbb{R}$ to maximize the revenue. The market value $v_{t}$ of product $x_{t}$ is unknown. 
After posting a price $p_{t}$, the customer decides whether to purchase
the product based on market value $v_{t}$. The market value $v_{t}$ is not observable to the agent, but only a binary-valued sale status variable $y_{t} \in \{-1, +1\}$. If $p_{t}\le v_{t}$, a sale occurs and the seller collects a revenue $p_{t}$ and $y_{t} = +1$; otherwise, no sale occurs and no revenue is received and $y_{t} = -1$. Formally, 
\begin{equation}\label{eq:sale_status}
    y_{t} = 
\begin{cases}
    +1 &\text{if } p_{t} \le v_{t}\\
    -1 &\text{if } p_{t} > v_{t}
\end{cases}
\end{equation}
The seller's objective is to develop a pricing policy that maximizes revenue received.

Figure \ref{fig:DP_design} briefly summarizes a general design of dynamic pricing algorithms for revenue maximization via illustration of the five steps in a single decision step. In particular, 
at each decision point $t+1$, the agent
\begin{enumerate}[leftmargin=1mm, noitemsep]
\item \textbf{Query}: The algorithm receives a query for pricing on the product with high-dimensional context vector $x_{t+1} \in \mathcal{X}$. 
\item \textbf{Learning}: The algorithm learns a demand parameter estimate $\widehat{\theta}_{t} \in \Omega$ based on up-to-time $t$ transaction records $\mathcal{D}_{[t]}=\{(x_{s}, p_{s}, y_{s})\}_{s=1}^{t}$ to predict market value $v_{t+1}$ of product $x_{t+1}$. 
\item \textbf{Pricing}: The algorithm posts a revenue-maximizing price $p_{t+1} = g(\widehat{\theta}_{t}; x_{t+1})$ with a user-specified pricing function $g$. 
\item \textbf{Feedback}: The algorithm receives a sale status $y_{t+1}$, based on the product's sale price $p_{t+1}$. 
\item \textbf{Update}: The algorithm updates the transaction records $\mathcal{D}_{[t+1]}
=\mathcal{D}_{[t]}\cup\{(x_{t+1}, p_{t+1}, y_{t+1})\}$.
\end{enumerate}

Building upon the above general design of dynamic pricing algorithms, our goal is to provide an online statistical learning framework that fulfills three desiderata--sparse learning, always-validity, and revenue-maximization--that outlined in Section \ref{sec:good_pricing_policy} to resolve high-dimensional dynamic pricing problems in continuous monitoring setting. The resulting dynamic pricing algorithms and the statistical learning framework are established in Section \ref{sec:product} and their formal fulfillment to the three desiderata are elaborated in Section \ref{sec:qualities}.

\subsection{Product market value model}\label{sec:choice_model}

Our statistical framework for market value $v_{t}$ of product $x_{t}$ consists of three parts: the market value model $v_{t}|x_{t}$, the target demand parameter $\theta_0$ and the martingale difference noise process $\{\eta_{t}\}_{t=1}^{T}$. First, 
we model market value $v_{t}$ of the product as a linear function of the observable product covariate $x_{t}$; formally 
\begin{equation}\label{eq:Will_to_Pay_linear_model}
    v_{t} = \langle \theta_0, x_{t} \rangle + \eta_{t}.
\end{equation}

Second, the unknown parameter $\theta_0$ is the {target demand parameter} that characterizes the demand profile of customers' behaviors. Parallel to  high-dimensional dynamic pricing literature \citep{javanmard2019dynamic}, we consider a structured feasible parameters set $\Omega$ in which 
$\theta_0$ is high-dimensional and sparse; formally, for user-specified constants $s_0$ and $W$, the feasible parameters set $\Omega$ is defined as
\begin{equation}\label{eq:feasible_parameter}
\Omega = \{\theta \in \mathbb{R}^{d}: \|\theta\|_0 \le s_0, \|\theta\|_1 \le W\}.
\end{equation}

Third, the noise process $\{\eta_{t}\}_{t=1}^{T}$ in $(\ref{eq:Will_to_Pay_linear_model})$ accounts for unmeasured context and random noises. Notably, we consider a more general and practical dependent noise process drawn from a martingale difference sequence that is adapted to current transaction records. That is, with respect to the $\sigma$-field 
\begin{equation}\label{eq:history}
\mathcal{H}_{t-1}=\sigma(x_{1}, p_{1}, y_{1}, \cdots, x_{t-1}, p_{t-1}, y_{t-1}, x_t, p_t)
\end{equation}
generated by all transaction records before $y_t$ is observed, the noise process $\eta_{t}$ satisfies  $\mathbb{E}[\eta_{t}|\mathcal{H}_{t-1}] = 0$ for all $t\in[T]$. Our dependent noise process relaxes the i.i.d. assumption considered in \cite{javanmard2019dynamic}. The conditional distribution of $\eta_{t}|\mathcal{H}_{t-1}$ is assumed to be log-concave in this paper. Many common probability distributions such as normal, logistic, uniform, exponential, Laplace, and bounded distributions are log-concave \citep{wellner2012log}. In particular, we define the `steepness` of a function $F_{\eta_{t}|\mathcal{H}_{t-1}}(\cdot)$ as
\begin{subequations}
\begin{equation}\label{eq:steepness}
u_{W, t} \equiv \sup _{|x| \leq 3 W}\left\{\max \left\{\log ^{\prime} F_{\eta_{t} \mid \mathcal{H}_{t-1}}(x),-\log ^{\prime}\left(1-F_{\eta_{t} \mid \mathcal{H}_{t-1}}(x)\right)\right\}\right\}
\end{equation}
and also define the 'flatness' of function $F_{\eta_{t}|\mathcal{H}_{t-1}}(\cdot)$ as 
\begin{equation}\label{eq:flatness}
l_{W,t} \equiv \inf _{|x| \leq 3 W}\left\{\min \left\{-\log ^{\prime \prime} F_{\eta_{t} \mid \mathcal{H}_{t-1}}(x),-\log ^{\prime \prime}\left(1-F_{\eta_{t} \mid \mathcal{H}_{t-1}}(x)\right)\right\}\right\}.
\end{equation}
\end{subequations}
In addition, we define the \textit{maximal steepness} to be the constant $u_{W}=\max_{t\in[T]}u_{W,t}$ and the \textit{minimal flatness} to be the constant  $l_{W}=\min_{t \in [T]}l_{W,t}$.

The above statistical framework of product market value induces a probabilistic model for the sale status process $\{y_{t}\}_{t=1}^{T}$. The sale status process denotes a trajectory of customer transaction decisions with respect to the corresponding pricing sequence $\{p_{t}\}_{t=1}^{T}$ and product sequence $\{x_{t}\}_{t=1}^{T}$. In particular, given the definition of sale status \eqref{eq:sale_status} and the market value model \eqref{eq:Will_to_Pay_linear_model}, the sale status process  $\{y_{t}\}_{t=1}^{T}$ is generated from the following probabilistic model:
\begin{equation}\label{eq:sale_status_stoc_model}
    \mathbb{P}_{\theta_0}(y_{t}|\mathcal{H}_{t-1})
    =
     \begin{cases}
     1-F_{\eta_{t}|\mathcal{H}_{t-1}}(p_{t}-\langle \theta_0, x_{t} \rangle)
      &\text{ if } y_{t} = +1, \\
    F_{\eta_{t}|\mathcal{H}_{t-1}}(p_{t}-\langle \theta_0, x_{t} \rangle)
     & \text{ if } y_{t}=-1
     ,
    \end{cases}
\end{equation}
where $F_{\eta_{t}|\mathcal{H}_{t-1}}(\cdot)$ denotes the conditional distribution of noise $\eta_{t}$ given $\mathcal{H}_{t-1}$.

\subsection{Pricing function}
\label{subsec:pricing_function}

Our framework allows a flexible pricing function $g$ used at Step 3 of the pricing algorithm design (Figure \ref{fig:DP_design}). Such a feature is standard in industrial practice to provide flexible deployment of dynamic pricing algorithms \citep{johari2021always}.  
We assume the pricing function $g$ is a $L$-Lipschitz continuous function for some Lipschitz constant $L\le 1$, which is satisfied by the common pricing function choice in the literature, given in Example \ref{eg:rev_max_price_function}.

\begin{example} \label{eg:rev_max_price_function}
To maximize the expected revenue, it is shown in auction theory \citep{myerson1981optimal, javanmard2019dynamic}, the revenue-maximizing price 
$p^{*}(x_t)=\arg\max_{p}\{p(1-F_{\eta_{t}|\mathcal{H}_{t-1}}(p-\langle \theta_0, x_{t}))\}.$
The first order conditions says that the optimal posted price $p_{t}^{*}=p^{*}(x_{t})$ satisfy
$$p_{t}^{*}=\frac{1-F_{\eta_{t}|\mathcal{H}_{t-1}}\left(p_{t}^{*}-\langle\theta_{0}, x_{t}\rangle\right)}{f_{\eta_{t}|\mathcal{H}_{t-1}}\left(p_{t}^{*}-\langle\theta_{0}, x_{t}\rangle \right)}=p_{t}^{*}-\langle\theta_{0}, x_{t}\rangle-\phi_{t}(p_{t}^{*}-\langle\theta_{0}, x_{t}\rangle)$$
by letting $\phi_{t}(v)\equiv v - \frac{1-F_{\eta_{t}|\mathcal{H}_{t-1}}(v)}{f_{\eta_{t}|\mathcal{H}_{t-1}}(v)}$.
That is, 
$\langle\theta_{0}, x_{t}\rangle+\phi_{t}(p_{t}^{*}-\langle\theta_{0}, x_{t}\rangle)=0$
and hence
$p_{t}^{*}= \langle\theta_{0}, x_{t}\rangle + (\phi_{t})^{-1}(-\langle\theta_{0}, x_{t}\rangle)=g_{t}(\langle\theta_{0}, x_{t}\rangle)$.
So the pricing function has the closed form 
\begin{equation}\label{eq:opt_price_function}
g_{t}(v)\equiv v + (\phi_{t})^{-1}(-v),
\end{equation}
where $\phi_{t}(v) \equiv v - (1-F_{\eta_{t}|\mathcal{H}_{t-1}}(v))/f_{\eta_{t}|\mathcal{H}_{t-1}}(v)$ is known as a \textit{virtual valuation} function. By Lemma S7.4, the pricing function $g_{t}$ is 1-Lipschitz continuous. 
\end{example}

\section{Evaluating dynamic pricing policy}
\label{sec:good_pricing_policy}

In this section, we elaborate on what makes a good dynamic policy. 
Our goal is to design a pricing policy $\pi$ that offers the price $p_{t}(\pi)$ for the product $x_{t}$ in order to (i) \textbf{learn} the true demand parameter $\theta_0$ to inform seller about the underlying product market value model \eqref{eq:Will_to_Pay_linear_model}, 
(ii) \textbf{continuously monitor} the estimation error of the estimated demand parameter, and (iii) \textbf{optimize} the posted price to maximize the expected revenue. In order for the policy $\pi$ to fulfill the learning and optimizing tasks, it must satisfy the following desiderata: (A) it should return a sparse demand parameter estimate to enhance the explainability of the pricing mechanism and product market value,  
(B) it should be able to \textit{adapt the online uncertainty} of product market value model \eqref{eq:Will_to_Pay_linear_model} to obtain \textit{always-valid} statistical error bounds, and (C) it should be \textit{revenue-maximized}, i.e., the difference between posted price $p_{t}(\pi)$ and the oracle price $\pi^{*}_{t}$ should be small. Consequently, it's critical to establish an effective strategy that strikes a balance between exploration (gathering data for learning parameters) and exploitation (offering optimal pricing based on learned parameters).

Having outlined the desiderata for our sought-after pricing policy, we now propose three properties of the online statistical learning framework that should be encoded in the adopted pricing policy.  
These properties are:
\begin{enumerate}[leftmargin=1mm, noitemsep]
\item[(A)] Sparse Learning:
the learned demand parameter identifies the subset of decisive pricing features to enhance the explainability of the learned market value model. (Section \ref{subsec:online_lasso})
\item[(B)]  Always-Validity: the estimation error of the online learned market value model remains statistical validity even when the pricing algorithm is terminated randomly.
(Section \ref{subsec:always_valid_bound})
\item[(C)]  Revenue-Maximization:
the collective revenue 
is comparable to the revenue of the oracle pricing policy which knows the true demand parameter. (Section \ref{subsec:regret})
\end{enumerate}

\subsection{Online Lasso procedure towards sparse learning}\label{subsec:online_lasso}

To achieve the first desiderata on learning sparse demand parameter estimate, we adopt the online Lasso procedure, defined as follows.
\begin{definition}
\label{def:online_lasso_procedure}
We define the \textbf{online Lasso procedure} as follows:
\begin{enumerate}[leftmargin=1mm, noitemsep]
\item At a decision point $t$, the agent calculates the negative log-likelihood function $\mathcal{L}(\theta; \mathcal{D}_{[t]})$ of a model parameter $\theta$ and up-to-time $t$ transaction records $\mathcal{D}_{[t]}$ as 
\begin{subequations} 
\begin{equation}\label{eq:self_info}
\mathcal{L}_{t}(\theta)\equiv \mathcal{L}(\theta; \mathcal{D}_{[t]}) = t^{-1}\sum_{s=1}^{t}\log(1/\mathbb{P}_{\theta}(y_{s}|\mathcal{H}_{s-1})).
\end{equation}
The probability  $\mathbb{P}_{\theta}(y_{s}|\mathcal{H}_{s-1})$ is from the Bernoulli model \eqref{eq:sale_status_stoc_model} of the sale status process $\{y_{t}\}_{t=1}^{T}$; that is, with $u_{t}(\theta)\equiv p_{t}-\langle \theta, x_{t} \rangle $, 
$\log(1/\mathbb{P}_{\theta}(y_{s}|\mathcal{H}_{s-1}))
=
\mathbb{I}\left(y_{t}=1\right) \log \left(1/(1-F_{\eta_{t} \mid \mathcal{H}_{t-1}}\left(u_{t}(\theta)\right))\right)+\mathbb{I}\left(y_{t}=-1\right) \log \left(1/F_{\eta_{t} \mid \mathcal{H}_{t-1}}\left(u_{t}(\theta)\right)\right).$
\item The algorithm penalizes the loss
$\mathcal{L}_{t}(\theta)$ by the $l_{1}$-norm penalty at regularization level $\lambda_{t}>0$. In particular, at decision point $t$, the algorithm learns an estimator $\widehat{\theta}_{t}$ by solving the $\ell_{1}$-regularized quadratic program
\begin{equation}\label{eq:current_LASSO_program}
    \widehat{\theta}_{t}
    \equiv \arg\min_{
    \|\theta\|_1 \le W} \bigg\{
    \mathcal{L}_{t}(\theta) 
    +
    \lambda_{t}
    \|\theta\|_1
    \bigg\}.
\end{equation}
\end{subequations}
\item 
Repeating the above Lasso procedure at each decision point $t=1,2,\cdots, T$, with a regularization level sequence $\{\lambda_{t}\}_{t=1}^{T}$, the agent thus learns at the decision horizon $T$  an estimation sequence: $\widehat{\theta}_{1}, \widehat{\theta}_{2}, \cdots, \widehat{\theta}_{T}$. 
\end{enumerate}
\end{definition}

The online Lasso procedure (Definition \ref{def:online_lasso_procedure}) delivers a statistical learning framework for online sparse learning. In practice, given a regularization level sequence $\{\lambda_{t}\}_{t=1}^{T}$, the online Lasso procedure returns a sequence of constrained estimators $\{\widehat{\theta}_{1}, \widehat{\theta}_{2}, \cdots, \widehat{\theta}_{T}\}$ towards learning a sparse demand parameter estimate in the product market value model \eqref{eq:Will_to_Pay_linear_model}.
Indeed, such online Lasso procedures benefit the interpretability of the resulting product market value model and the explainability of the pricing mechanism. 

However, the benefit of the online Lasso procedure may be blocked by an improper choice of regularization level sequences $\{\lambda_{t}\}_{t=1}^{T}$. 
As well-recognized in the high-dimensional statistics literature, different regularization level sequences $\{\lambda_{t}\}_{t=1}^{T}$ lead to different properties of resulting constrained estimators sequence $\{\widehat{\theta}_{1}, \widehat{\theta}_{2}, \cdots, \widehat{\theta}_{T}\}$. As far as the continuous monitoring dynamic pricing concerns, the fundamental challenge is how to choose the regularization level $\lambda_{t}$ in Lasso program \eqref{eq:current_LASSO_program} at a process level, i.e. for every decision step $t$ from 1 to the random decision horizon $T$. Section \ref{sec:oolasso} contributes the key observation that the online Lasso procedure builds and thrives on online regularization scheme design to calibrate online uncertainty during the pricing process.

\subsection{Always valid estimation error bound process} \label{subsec:always_valid_bound}

To achieve the second desiderata on always-valid online statistical learning, we introduce the concept of always-valid estimation error bound process, defined as follows :
\begin{definition}\label{def:always_valid_error_bound}
Given any (possible unbounded) stopping time $T$ with respect to historical filtration $\{\mathcal{H}_{t}\}_{t=0}^{T}$ (defined at \eqref{eq:history}). A sequence of constant real number $\{r_{t}\}_{t=1}^{T}$ is an \textbf{always valid estimation error bound process} of the estimator sequence $\{\widehat{\theta}_{1}, \widehat{\theta}_{2}, \cdots, \widehat{\theta}_{T}\}$ with confidence budget $\alpha$ if it holds that
\begin{equation}\label{eq:AlwaysValid_RiskBound}
\mathbb{P}_{\theta_0}\bigg(\exists t \in 
[T]: \|\hat{\theta}_{t}-\theta_0\|_2 > r_{t}\bigg) \le \alpha.
\end{equation}
\end{definition}

The always valid estimation error bound process (Definition \ref{def:always_valid_error_bound}) serves as a principal theoretical tool for online service industrial practice in the continuously monitoring risk control of adopted online statistical learning procedures. For an online learned estimator sequence $\{\widehat{\theta}_{1}, \widehat{\theta}_{2}, \cdots, \widehat{\theta}_{T}\}$, the corresponding error bound process $\{r_{1}, r_{2}, \cdots, r_{T}\}$ collectively gives a time-uniform control on the estimation error sequence $\{\|\widehat{\theta}_{1}-\theta_0\|_{2}, \|\widehat{\theta}_{2}-\theta_0\|_{2}, \cdots, \|\widehat{\theta}_{T}-\theta_0\|_{2}\}$ such that the probability of out-of-control is at most at the level of user pre-specified confidence budget $\alpha$. Such time-uniform risk control allows users to terminate the dynamic pricing algorithm whenever they wish, \textit{and} the result still maintains statistical validity. 

Establishing such an always valid error-bound process, however, is technically challenging and far from understood in the literature. The reason is that, while estimation error bound result for fixed sample size Lasso regression had been systematically studied in the literature and inspired people for an elegant theoretical framework, they focused on offline uncertainty (the whole dataset is given) instead of online uncertainty (the dataset is not given and is observed on the fly). Consequently, the classical method of high-dimensional statistics literature fails to meet the challenge of online statistical learning with continuous monitoring demanded in the modern online service industry. Section \ref{sec:time_unif_LASSO_oracle} contributes a key theoretical result on the always validity of online Lasso procedure (Definition \ref{def:online_lasso_procedure}).

\subsection{Regret of a dynamic pricing policy}
\label{subsec:regret}

To achieve the third desiderata of revenue maximization, we define the notion of regret.

\begin{definition}
The regret of a dynamic pricing policy $\pi$ up to decision $T$ is defined as
\label{def:regret}
\begin{equation}\label{eq:regret_def}
\textbf{Regret}_{\mathcal{\pi}}(T) \equiv \max_{\theta_0 \in \Omega}
\mathbb{E}\bigg[ 
\sum_{t=1}^{T}
\big( 
r_{t}(p_{t}^{*})-r_{t}(p_{t}(\pi))
\big)
\bigg],
\end{equation}
where $r_{t}(p)\equiv pI(v_{t} \ge p)$ is the expected revenue of the product $x_{t}$ with the posted price $p$. The expectation is taken with respect to the noise $\eta_{t}$ and product context $x_{t}$, and $p_{t}(\pi)$ denotes the price offered at decision step $t$ by following policy $\pi$. 
\end{definition}

Definition \ref{def:regret} benchmarks the performance of a dynamic pricing policy $\pi$ that determines posted prices $\{p_{t}\}_{t=1}^{T}$ to the corresponding 'oracle pricing policy', which exploits knowledge of the true demand parameter $\theta_0$ and proposes the price $p_{t}^{*}=g(\langle \theta_0, x_{t} \rangle )$ for the product of context $x_{t}$, where $g(\cdot)$ is a user-specified pricing function.
In Example \ref{eg:rev_max_price_function}, the optimal price $p_{t}^{*}$ is the price that maximizes the expected revenue. Formally, we consider the goal of maximizing revenue as minimizing the maximum regret at Definition \ref{def:regret}. As pursued as the third desiderata of pricing policy, the goal is to design an online statistical learning procedure such that the regret \eqref{eq:regret_def} is small.

\section{The \texttt{OORMLP} algorithm and \texttt{OOLASSO} procedure}
\label{sec:product}

This section establishes our pricing policy design that achieves the three desiderata discussed in Section \ref{sec:good_pricing_policy}.
We first propose the Optimistic Online Regularized Maximum Likelihood Pricing (\texttt{OORMLP}) algorithm (Algorithm \ref{alg:OORMMLP}) as the desirable dynamic pricing policy at Section  \ref{sec:OORMLP}. Then we elaborate our novel Optimistic Online Lasso procedure (\texttt{OOLASSO}) towards always valid online statistical learning at Section \ref{sec:OOLASSO_procedure}.

\subsection{\texttt{OORMLP} algorithm}
\label{sec:OORMLP}

In this section, we present the proposed dynamic pricing policy at Algorithm \ref{alg:OORMMLP}. The presentation follows the general design of dynamic pricing algorithms in Figure \ref{fig:DP_design}. In particular, at decision point $t+1$, the agent learns the demand parameter estimator $\widehat{\theta}_{t}$ based on the current transaction records $\mathcal{D}_{[t]}$ via Lasso regression in \eqref{eq:current_LASSO_program} at regularization level $\lambda_{t}$ specified in the optimistic online regularization scheme \eqref{eq:main_online_regularization}. In addition, both the sample covariance matrix $\widehat{\Sigma}_{[t]}$ and the online regularization sequence $\{\lambda_{t}\}_{t=1}^{T}$ in \eqref{eq:main_online_regularization} can be incrementally updated: at each decision point $t$, 
$$\widehat{\Sigma}_{[t]} \gets t^{-1}\left[(t-1)\widehat{\Sigma}_{[t-1]}+x_{t}x_{t}^\top\right];~~
\lambda_{t} \gets \lambda_{t-1}\sqrt{(1-t^{-1})\|\widehat{\Sigma}_{[t]}\|_{\infty}/\|\widehat{\Sigma}_{[t-1]}\|_{\infty}}.$$
Such property allows an efficient online implementation in the experiments.

\begin{algorithm}[t]
  \caption{Optimistic Online Regularized Maximum Likelihood Pricing (\texttt{OORMLP})}
  \label{alg:OORMMLP}
  \begin{algorithmic}[1]
   \REQUIRE Steepness of market noise $u_{W}$, pricing function $g(\cdot)$ and confidence budget $\alpha$.
   \STATE \textit{Initialization}: Receive product context $x_1$.
   Post price $p_1$.
   Receive sale status $y_1$.
   \STATE 
   $\mathcal{D}_{[1]}\gets \{(x_1, p_1, y_1)\}$;
   $\widehat{\Sigma}_{[1]}\gets x_1x_1^\top$;
   $\lambda_{1} \gets 4u_{W} \sqrt{ 2 \|\text{diag}(\widehat{\Sigma}_{[1]})\|_{\infty}\ln(2d/\alpha) }.$  
   \FOR{$t = 2, \dots, [T]$}
   \STATE \textbf{1.Query:} Receive product context $x_{t}$.
   \STATE \textbf{2.Learning:} Update the sample covariance matrix and regularization level:
   \begin{subequations}
   \begin{align}
   \widehat{\Sigma}_{[t]} &\gets t^{-1}\left[(t-1)\widehat{\Sigma}_{[t-1]}+x_{t}x_{t}^\top\right],\\
   \lambda_{t} &\gets \lambda_{t-1}\sqrt{(1-t^{-1})\|\widehat{\Sigma}_{[t]}\|_{\infty}/\|\widehat{\Sigma}_{[t-1]}\|_{\infty}};
   \end{align}
   \end{subequations}
   \vspace{-5mm}
   \STATE Update the estimate
   \begin{equation}
   \widehat{\theta}_{t-1} \gets 
   \arg\min_{
   \|\theta\|_1 \le W
   } \left\{
    \mathcal{L}_{t-1}(\theta)
    +
    \lambda_{t-1}
    \|\theta\|_1\right\}.
   \end{equation}
   \STATE \textbf{3.Pricing:} Post price $p_{t} \gets g\left(\left\langle \widehat{\theta}_{t-1}, 
    x_{t}\right\rangle\right)$. 
   \STATE \textbf{4.Feedback:} Receive sale status $y_{t}$. 
   \STATE \textbf{5.Update:} $\mathcal{D}_{[t]}\gets \mathcal{D}_{[t-1]}\cup\{(x_t, p_t, y_t)\}.$
   \ENDFOR 
  \end{algorithmic}
\end{algorithm}

\subsection{Optimistic online lasso procedure} \label{sec:OOLASSO_procedure}

Here, we elaborate our novel approach to construct a learning process  $\widehat{\theta}_{1}, \widehat{\theta}_{2}, \cdots, \widehat{\theta}_{T}$ for the target demand parameter $\theta_0$ based on transaction records $\mathcal{D}_{[t]}=\{(x_s, p_s, y_s)\}_{s=1}^{t}$ with optimism in the face of online uncertainty during the pricing process.

\begin{definition}\label{def:opt_online_reg}
An online Lasso procedure (Definition \ref{def:online_lasso_procedure}) is \textbf{optimistic} if the regularization sequence $\{\lambda_{t}\}_{t=1}^{T}$ is specified by the following \textbf{optimistic online regularization scheme}:
\begin{equation}\label{eq:main_online_regularization}
    \lambda_{t}(\alpha) \equiv 4u_{W} \sqrt{2\cdot t^{-1}  \|\text{diag}(\widehat{\Sigma}_{[t]})\|_{\infty}  \ln(2d/\alpha) }.
\end{equation}
\end{definition}

Definition \ref{def:opt_online_reg} presents our novel regularization scheme for regulating online uncertainty during the dynamic pricing process. 
The reason we call \eqref{eq:main_online_regularization} optimistic is that it regularizes the online LASSO procedure with optimism in the face of both demand uncertainty and product feature uncertainty during the dynamic pricing process, given a specified confidence budget $\alpha$. Three factors contribute to the regularization level $\lambda_{t}(\alpha)$. First, the constant $u_{W}$ is the \textit{maximal steepness} of noise process \eqref{eq:steepness}
and represents our prior knowledge of demand uncertainty. Second, the empirical covariance matrix $\widehat{\Sigma}_{[t]}=t^{-1}\sum_{s=1}^{t}x_{s}x_{s}^\top$ characterizes the uncertainty of up-to-now product context sequence. Third, the constant $\alpha$ stands for the user-pre-specified confidence budget for the always-validity of implemented online LASSO procedure. These factors collectively express optimism in the face of online uncertainty during the dynamic pricing process and are the foundation to fulfill the three desiderata we pursued in Section \ref{sec:good_pricing_policy}. Consequently, we adopt the optimistic online regularization scheme \eqref{eq:main_online_regularization} to design \texttt{OORMLP} algorithm (Algorithm \ref{alg:OORMMLP}) to enjoy three desiderata-sparse learning, always-validity and revenue-maximization-on resulting dynamic pricing policy.

\begin{remark}\label{rm:compare_RMLP}(Regularization comparison to \texttt{RMLP} in \cite{javanmard2019dynamic}) 
The relation between our regularization scheme and the one in \texttt{RMLP} is $$\lambda_{t, \texttt{OORMLP}}(\alpha)=\lambda_{t, \texttt{RMLP}}  \sqrt{2 \frac{\log _{2}(t)}{t}} \sqrt{\frac{\log (2 d / \alpha)}{\log (d)}}\sqrt{\|\text{diag}(\widehat{\Sigma}_{[t]})\|_{\infty}}.$$ 
The relation above indicates that, while \texttt{RMLP} do not, \texttt{OORMLP} includes in the regularization level the uncertainty arising from context sequence ($\|\text{diag}(\widehat{\Sigma}_{[t]})\|_{\infty}$). The root reason why \texttt{RMLP} does not take $\|\text{diag}(\widehat{\Sigma}_{[t]})\|_{\infty}$ into their regularization design is due to their assumption on the independent identical distributed property on the product sequence. When the distribution of product sequence deviates from such i.i.d. assumption, the regularization level in \texttt{RMLP} is improper to account for the effective noise process in the LASSO procedure. Our regularization design takes the context sequence uncertainty into account, which leads to the robustness of \texttt{OORMLP} in the experiments. 
\end{remark}


\section{Always-validity and regret analysis}
\label{sec:qualities}

This section elaborates on formal guarantees of three qualities of our \texttt{OORMLP} algorithm and \texttt{OOLASSO} procedure. Section \ref{sec:oolasso} demystifies the design principle behind our optimistic online regularization scheme, formally achieving the first desiderata: sparse learning. Section \ref{sec:time_unif_LASSO_oracle} establishes the time-uniform Lasso oracle inequality (Theorem \ref{thm:always_oracle}), formally achieving the second desiderata: always-validity. 
Section \ref{subsec:regret_ana} present regret analysis (Theorem \ref{thm:regret_analysis}) of our \texttt{OORMLP} pricing policy, formally achieving the third desiderata: revenue-maximizing.

\subsection{Optimistic online regularization scheme}\label{sec:oolasso}

This section demystifies the optimistic online regularization scheme \eqref{eq:main_online_regularization} as a formal guarantee of the sparse learning of our online statistical learning framework.

\subsubsection{Basic design principle}\label{subsec:basic_design_regu}

We now explain the design principle of  the regularization sequence $\{\lambda_{t}\}_{t=1}^{T}$ for the optimistic online regularization scheme at \eqref{eq:main_online_regularization}. In principle, our goal is to design a regularization sequence $\{\lambda_{t}\}_{t=1}^{T}$ that warrants the online LASSO procedure (Definition \ref{def:online_lasso_procedure}) with always-validity by constructing an always valid estimation error bound process (Definition \ref{def:always_valid_error_bound}).
Intuitively, the optimal choice of the sequence is an outcome of the bias-and-variance trade-off. Bias arises as a shrinkage effect from $l_1$-regularizer and grows as $\lambda_{t}$ increases. Besides, $l_1$-regularizer offsets fluctuations in the score function process $\{\nabla \mathcal{L}_{t}(\theta)\}_{t=1}^{T}$. Hence, an optimal choice of $\{\lambda_{t}\}_{t=1}^{T}$ is the smallest \textit{envelop} that is large enough and \textit{always} controls score fluctuations during the whole pricing process.

To obtain an always valid estimation error bound process of the online LASSO procedure \eqref{eq:current_LASSO_program}, 
we generalize standard guidance from high-dimensional statistics literature to the \textit{process} level by considering the event
\begin{equation}\label{eq:lasso_valid_event}
\mathfrak{G}(\{\lambda_{t}\}_{t=1}^{T})=\left\{\forall t \in [T]: 4t^{-1}\|\nabla \mathcal{L}_{t}(\theta_0)\|_{\infty}\le \lambda_{t}\right\}.
\end{equation}
Given the above event, Theorem \ref{thm:always_oracle} in Section \ref{sec:time_unif_LASSO_oracle} shows that it is possible to build an \textit{always valid} estimation error bound on the proposed online LASSO procedure. Therefore, an optimal design of $\{\lambda_{t}\}_{t=1}^{T}$ should be the one to ensure that $\mathfrak{G}(\{\lambda_{t}\}_{t=1}^{T})$ holds with high probability.

Toward finding such an optimal selection, for a given confidence budget $\alpha \in (0,1)$, our goal is to find a regularization level sequence $\{\lambda_{t}(\alpha)\}_{t=1}^{T}$ that satisfies 
\begin{equation}\label{eq:good_event_pb}
    \mathbb{P}_{\theta_0}\left(\mathfrak{G}(\{\lambda_{t}(\alpha)\}_{t=1}^{T})\right) \ge 1-\alpha.
\end{equation}
As supported by Lemma \ref{lm_1:score_bouund} in Section \ref{subsec:formal_oolasso}, the proposed optimistic online regularization scheme (Definition \ref{def:opt_online_reg})
satisfies the property \eqref{eq:good_event_pb}. Therefore, when the agent learns the target demand parameter $\theta_0$ by solving the LASSO problem in \eqref{eq:current_LASSO_program} with the specified optimistic online regularization scheme in \eqref{eq:main_online_regularization}, the resulting estimator process $\{\widehat{\theta}_1, \widehat{\theta}_2, \cdots, \widehat{\theta}_T\}$
enjoys an \textit{always-validity}, i.e., the implemented online statistical learning procedure is theoretically valid at each decision point with a time-uniform estimation error bound (Theorem \ref{thm:always_oracle}). Such always-validity serves as a warranty on the robustness and safety of dynamic pricing algorithm design and fulfills the second desiderata pursued in Section \ref{sec:good_pricing_policy}.

\subsubsection{Formal  design}\label{subsec:formal_oolasso}

Here, we give a formal derivation of the online regularization scheme design to implement the principle outlined in Section \ref{subsec:basic_design_regu}. To analyze the event of valid Lasso procedure \eqref{eq:lasso_valid_event}, we first show a consequence of optimistic online regularization scheme \eqref{eq:main_online_regularization} on the infinity norm of score function process: 
\begin{lemma}{(Always Valid Score Function Process Bound)} 
\label{lm_1:score_bouund}
Under the optimistic online regularization scheme \eqref{eq:main_online_regularization},
it holds with probability at least $1-\alpha$ that 
\begin{equation}
\forall t \in 
[T]: \|\nabla \mathcal{L}_{t}(\theta_0)\|_{\infty} \le u_{W} \sqrt{ 2 t^{-1} \|\text{diag}(\widehat{\Sigma}_{[t]})\|_{\infty}  \ln(2d/\alpha)}.
\end{equation}
\end{lemma}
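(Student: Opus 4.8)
The plan is to compute the score function $\nabla \mathcal{L}_t(\theta_0)$ explicitly, recognize each coordinate as a normalized sum of a martingale difference sequence, and then apply a time-uniform (anytime-valid) martingale concentration inequality with a variance process chosen to match the empirical covariance $\widehat{\Sigma}_{[t]}$. First I would differentiate the self-information loss $\mathcal{L}_t(\theta) = t^{-1}\sum_{s=1}^t \log(1/\mathbb{P}_\theta(y_s \mid \mathcal{H}_{s-1}))$ using the Bernoulli model \eqref{eq:sale_status_stoc_model}. Writing $F_s \equiv F_{\eta_s \mid \mathcal{H}_{s-1}}$ and $z_s \equiv p_s - \langle \theta_0, x_s\rangle$, a direct calculation gives $\nabla_\theta \log(1/\mathbb{P}_\theta(y_s\mid\mathcal{H}_{s-1})) = \xi_s x_s$, where $\xi_s$ is $1_{\{y_s=+1\}}\cdot \log' F_s(z_s) - 1_{\{y_s=-1\}}\cdot \log'(1-F_s(z_s))$ (up to sign bookkeeping), so that $\nabla\mathcal{L}_t(\theta_0) = t^{-1}\sum_{s=1}^t \xi_s x_s$. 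The key structural facts are: (i) $\mathbb{E}[\xi_s \mid \mathcal{H}_{s-1}] = 0$ — this is the standard score-has-mean-zero identity, here valid conditionally because $\mathbb{P}_{\theta_0}(\cdot\mid\mathcal{H}_{s-1})$ is the true conditional law of $y_s$; (ii) $|\xi_s| \le u_{W,s} \le u_W$ almost surely, since $|z_s| = |p_s - \langle\theta_0,x_s\rangle| \le 3W$ on the relevant event (using $\|\theta_0\|_1\le W$, $\|x_s\|_\infty\le1$, and the range of posted prices) so $\xi_s$ lies within the steepness bound defining $u_W$. Hence for each coordinate $j\in[d]$, the sequence $\{\xi_s (x_s)_j\}_{s\ge 1}$ is a martingale difference sequence adapted to $\{\mathcal{H}_s\}$, bounded in absolute value by $u_W$, with conditional variance at most $u_W^2 (x_s)_j^2$.

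Next I would apply a time-uniform martingale concentration bound — the line-crossing / self-normalized type inequality of Howard et al.\ \cite{howard2018exponential} (or Maillard \cite{maillard2019mathematics}) — to the partial sums $M_t^{(j)} = \sum_{s=1}^t \xi_s(x_s)_j$. The natural running variance proxy is $V_t^{(j)} = u_W^2\sum_{s=1}^t (x_s)_j^2 = u_W^2 \cdot t\cdot(\widehat{\Sigma}_{[t]})_{jj} \le u_W^2 \cdot t \cdot \|\mathrm{diag}(\widehat\Sigma_{[t]})\|_\infty$. The anytime-valid sub-Gaussian-type bound yields, for a fixed $j$ and any $\delta\in(0,1)$, that with probability at least $1-\delta$, $|M_t^{(j)}| \le \sqrt{2 V_t^{(j)} \ln(1/\delta)}$ simultaneously for all $t$ (possibly with the usual mild iterated-logarithm inflation absorbed into constants, or by using a boundary tuned to the relevant scale). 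Dividing by $t$ gives $|(\nabla\mathcal{L}_t(\theta_0))_j| \le u_W\sqrt{2 t^{-1}\|\mathrm{diag}(\widehat\Sigma_{[t]})\|_\infty \ln(1/\delta)}$ for all $t$, with probability $1-\delta$. Taking $\delta = \alpha/(2d)$ and applying a union bound over the $2d$ events indexed by coordinate $j$ and sign then upgrades this to the $\ell_\infty$ statement $\|\nabla\mathcal{L}_t(\theta_0)\|_\infty \le u_W\sqrt{2t^{-1}\|\mathrm{diag}(\widehat\Sigma_{[t]})\|_\infty \ln(2d/\alpha)}$ for all $t\in[T]$, with probability at least $1-\alpha$, which is exactly the claim.

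The main obstacle is invoking a time-uniform concentration inequality that accommodates a \emph{data-dependent} (random, predictable) variance process rather than a deterministic one: the empirical diagonal $\|\mathrm{diag}(\widehat\Sigma_{[t]})\|_\infty$ is itself a function of the history. The clean way around this is to use the line-crossing inequalities of \cite{howard2018exponential}, which are stated precisely for sub-$\psi$ processes with an intrinsic-time (predictable quadratic variation) parameterization, so that the bound holds uniformly over $t$ with the empirical variance appearing on the right-hand side automatically; alternatively one fixes a deterministic upper envelope using $\|x_s\|_\infty\le 1$ (giving $(x_s)_j^2\le 1$, hence $V_t^{(j)}\le u_W^2 t$) and then tightens back to $\widehat\Sigma_{[t]}$ via the observation that the relevant variance is exactly $\sum_s (x_s)_j^2$, which is predictable. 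A secondary subtlety is verifying that the boundedness $|z_s|\le 3W$ needed to control $|\xi_s|$ by $u_W$ holds deterministically given the pricing rule $p_s = g(\langle\widehat\theta_{s-1},x_s\rangle)$ and the definition of the steepness over $|x|\le 3W$; this is where the factor $3$ in $u_W$'s definition is used, and it should be checked that the posted prices stay in the prescribed range (this is a standing assumption implicit in the algorithm's setup). Once these points are handled, the remainder is the routine differentiation and union-bound bookkeeping sketched above.
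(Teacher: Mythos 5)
Your proposal matches the paper's proof essentially step for step: coordinate-wise decomposition of the score into the bounded martingale difference sequence $\xi_s(\theta_0)X_s^{(r)}$, the conditional sub-Gaussian bound with parameter $u_W X_s^{(r)}$, a Ville-type time-uniform supermartingale inequality with the predictable variance $u_W^2\sum_{s\le t}(X_s^{(r)})^2 = u_W^2\, t\,(\widehat\Sigma_{[t]})_{rr}$, and a union bound over coordinates at level $\alpha/d$. The one subtlety you flag --- that the boundary $\sqrt{2V_t\log(1/\delta)}$ requires optimizing the exponential tilt separately for each $t$, which a single application of Ville's inequality at a fixed tilt does not by itself license for a growing variance process --- is present in the paper's own argument as well, so your treatment is, if anything, slightly more careful on that point.
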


Lemma \ref{lm_1:score_bouund} provides a time-uniform control on the score function process $\{\nabla \mathcal{L}_{t}(\theta_0)\}_{t=1}^{T}$ of their infinity norm process. Concretely, the result 
bounds the fluctuation of score function process $\{\|\nabla\mathcal{L}_{t}(\theta_0)\|_{\infty}\}_{t=1}^{T}$ at the true demand parameter $\theta_0$ by carefully designing the online regularization sequence $\{\lambda_{t}\}_{t=1}^{T}$ to adaptive realized online uncertainty at each decision point. 
As remarked in Section \ref{sec:oolasso}, the online regularization scheme \eqref{eq:main_online_regularization} warrants always-validity of the \texttt{OOLASSO} procedure. Consequently, the design of optimistic online regularization scheme \eqref{eq:main_online_regularization} follows from Lemma \ref{lm_1:score_bouund} and the event of valid Lasso procedure \eqref{eq:lasso_valid_event}.


\begin{remark}
An advantage of the always-valid type result in Lemma \ref{lm_1:score_bouund} is that it holds for not only a constant decision horizon $T$ (independent from the pricing process) but also a \textit{random} decision horizon $T(w)$ (dependent on the pricing process). This property enables us to do valid inferences at randomly stopped times. 
\end{remark}

\subsubsection{Exploration-exploitation trade-off}
\label{sec:EEtradeoff_regu_online} 
We briefly discuss how the proposed optimistic online regularization scheme \eqref{eq:main_online_regularization} balances the explore-exploit trade-off during the pricing process. As we will show in Theorems \ref{thm:always_oracle} and \ref{thm:regret_analysis}, the revenue loss of the \texttt{OORMLP} in each decision point $t$ is of the same order as the squared estimation error bound $\|\widehat{\theta}_{t}-\theta_0\|_2^2$ which is bounded by $\lambda_{t}^2$.  
Thus, the regularization level $\lambda_{t}$ determines the {pricing optimism} of \texttt{OORMLP}. Price with larger revenue loss can be viewed as ``price exploration'' since larger price uncertainty helps the learning of $\theta_0$. On the other hand, a price with a smaller revenue loss can be viewed as ``price exploitation'', indicating that the agent exploits the learned demand parameter to maximize the collected revenue.

In general, the proposed optimistic online regularization scheme \eqref{eq:main_online_regularization} delivers a pricing policy that gradually shifts from price exploration to price exploitation. There are three main factors that contributed to pricing optimism: market noise knowledge $u_{W}$, product context process $\widehat{\Sigma}_{[t]}$, and confidence budge $\alpha$. Each of them captures different uncertainties happening in dynamic pricing, where $u_{W}$ measures demand uncertainty,  $\widehat{\Sigma}_{[t]}$ measures product feature uncertainty and $\alpha$ measures online procedure uncertainty. Section \ref{sec:oolasso} explains how these factors contribute to the regularization level in the face of online uncertainty. Section \ref{sec:simulation} investigates how these factors contribute to pricing optimism in the numerical experiments.

\subsection{Time-uniform lasso oracle inequality}\label{sec:time_unif_LASSO_oracle} 

This section establishes the time-uniform lasso oracle inequality (Theorem \ref{thm:always_oracle}) as a formal guarantee of the always-validity (the second desiderata; Section \ref{subsec:always_valid_bound}) of our framework.

To derive an error envelop for the estimates $\{\widehat{\theta}_{t}\}_{t=1}^{T}$ produced from \texttt{OOLASSO}, we first define a restricted eigenvalue process condition as a process analogue of a standard requirement in high-dimensional statistical estimation \citep{wainwright2019high}.

\begin{definition}\label{def:condi_RE_process}
For a product context process $\{x_{t}\}_{t=1}^{T}$, we say it satisfies a \textbf{restricted eigenvalue process} condition if there exists a sequence of positive number $\{\phi^2_t\}_{t=1}^{T}$ such that 
\begin{equation}\label{condi_RE_process}
\forall t \in [T] :\min_{J \subseteq [d]; |J|\le s_0} \min_{v\neq 0; \|v_{J^{c}}\|_{1}\le 3 \|v_{J}\|_{1}} \left(v^\top \widehat{\Sigma}_{[t]} v\right)/\|v\|_2^2 \ge \phi^2_t,
\end{equation}
where $v_{J}$ is the vector obtained by setting the elements of $v$ that are not in $J$ to zero.
\end{definition}

\begin{remark}\label{rmk:require_x_t}(On the requirement of product context sequence $\{x_{t}\}_{t=1}^{T}$)
Here, we only present the widely adopted restricted eigenvalue condition on the product context sequence $\{x_{t}\}_{t=1}^{T}$ to prove the time-uniform oracle inequality. Such conditions on the product context sequence can be relaxed by adapting arguments in high-dimensional inference literature (See, for example, \cite{chichignoud2016practical}).
\end{remark}

\begin{remark}(On the lower bound sequence $\{\phi^2_t\}_{t=1}^{T}$)
Let $\Sigma_0$ be the population covariance matrix of product context $x_{t}$ and denote its restricted eigenvalue as $\phi^2(\Sigma_0, s_0 )$. Based on matrix martingale concentration arguments, it can be shown that a choice of the lower bound sequence $\{\phi^2_t\}_{t=1}^{T}$ under confidence budget $\alpha$ is
\begin{equation*}
\phi_t^2
=
\phi^2(\Sigma_0, s_0 )
-
32s_0
\left[
\sqrt{2t^{-1}\ln(d(d+1)/2\alpha)}
+
t^{-1}\ln(d(d+1)/2\alpha)
\right].
\end{equation*}
\end{remark}



\begin{theorem}{(Always valid estimation error bound process)}\label{thm:always_oracle}
Suppose the product contexts process $\{x_{t}\}_{t=1}^{T}$ satisfies the restricted eigenvalue condition \eqref{condi_RE_process} with a non-random sequence $\{\phi_{t}^2\}_{t=1}^{T}$.  Then, under the online regularization scheme \eqref{eq:main_online_regularization}, it holds that:
\begin{equation}\label{eq:time-uniform_oracle}
\mathbb{P}_{\theta_0}\bigg(
\exists t \in 
[T]: \left\|\widehat{\theta}_{t} - \theta_0\right\|^2_2 \ge \frac{16s_0\lambda_{t}^2(\alpha)}{l_{W}^2\phi_{t}^2}\bigg) \le \alpha.
\end{equation}
\end{theorem} 

Theorem \ref{thm:always_oracle} provides a formal guarantee of the always-validity of our online statistical learning framework.
With a such guarantee, the user is allowed to terminate the dynamic pricing algorithm whenever they wish, and the result of estimation error bound maintains statistical validity. In particular, Theorem \ref{thm:always_oracle} indicates that the convergence rate of learning demand parameter $\theta_0$ is determined by three primary factors: (1) Non-smoothness of martingale difference noise conditional distribution function $F_{\eta_{t}|\mathcal{H}_{t-1}}$. This is captured by the minimal flatness defined by \eqref{eq:flatness}.  It controls the amount of information about the mean market value $\langle x_{t} ,\theta_0\rangle$ of product $x_{t}$ at each time step $t$. (2) The rate at which the product context $x_{t}$ explores the parameter space. This is governed by the restricted eigenvalue process condition (Definition \ref{def:condi_RE_process}). If the lower bound sequence $\{\phi_{t}^{2}\}_{t=1}^{T}$ is small, the product context is relatively aligned and one requires a larger sample size to estimate the demand parameter within a specified accuracy. (3) The complexity of demand parameter $\theta_0$. This is captured through the sparsity measure $s_0$ in the feasible parameter space \eqref{eq:feasible_parameter}.

\subsection{Regret analysis of the \texttt{OORMLP} algorithm}\label{subsec:regret_ana}
\vspace{-2mm}

This section establishes the regret analysis (Theorem \ref{thm:regret_analysis}) of the proposed \texttt{OORMLP} dynamic pricing algorithm (Algorithm \ref{alg:OORMMLP}) as a formal guarantee of the revenue-maximization quality (the third desiderata; Section \ref{subsec:regret}) of our online statistical learning framework. 
The following theorem bounds the regret of the proposed \texttt{OORMLP} dynamic pricing algorithm.
\begin{theorem}{(Regret guarantee for \texttt{OORMLP} algorithm)}\label{thm:regret_analysis}
Suppose the product context sequence $\{x_{t}\}_{t=1}^{T}$ satisfies the restricted eigenvalue condition \eqref{condi_RE_process} with a non-random sequence $\{\phi_{t}^2\}_{t=1}^{T}$.  Then, under the online regularization scheme \eqref{eq:main_online_regularization}, with probability at least $1-\alpha$,
\begin{equation}\label{eq:FullRegret}
\textbf{Regret}_{\texttt{OORMLP}}(T)  \leq \frac{256C s_0u_{W}^2}{l_{W}^2\min_{t\in[T]}\phi_{t}^2}\ln(\frac{2d}{\alpha})\log T.
\end{equation}
\end{theorem}

To read the regret bound \eqref{eq:FullRegret}, we break it into three elements of dynamic pricing problems. First, the regret bound depends on the product market value model (Sec. \ref{sec:choice_model}) in terms of $s_0$, the sparsity level of demand coefficient, and $d$, the dimension of product context, at the rate $s_0\log d$. Second, the regret bound depends on the martingale difference noise process $\{\eta_{t}\}_{t=1}^{T}$ at \eqref{eq:Will_to_Pay_linear_model} in terms of $u_{W}$, the maximal steepness and $l_{W}$, the minimal flatness, at the rate $(u_{W}/l_{W})^2$. Third, the regret bound depends on the product context sequence $\{x_{t}\}_{t=1}^{T}$ via the restricted eigenvalue sequence $\{\phi^2_t\}_{t=1}^{T}$ (Definition \ref{condi_RE_process}) at the rate $1/\min_{t\in [T]}\phi_{t}^2$. Under additional assumptions on the boundedness of these parameters, we achieve an $O(\log T)$ regret bound of \texttt{OORMLP}, which meets the information-theoretical lower bound shown in \cite{javanmard2019dynamic}. 

\begin{remark}{(Comparison to \texttt{RMLP} algorithm proposed in \cite{javanmard2019dynamic})} \label{rm:alg_design}
We emphasize that our regret bound (Theorem \ref{thm:regret_analysis}) is always valid in the sense that the result holds for a random decision horizon $T$. In contrast, the regret bound of \texttt{RMLP} only holds for a fixed constant decision horizon $T$. This is because  
the \texttt{RMLP} algorithm used the \textit{doubling trick} to apply batch-type concentration result based on i.i.d. noise assumption in dynamic pricing algorithm design, while our result is based on martingale concentration. First, \texttt{RMLP} is not as sample efficient as \texttt{OORMLP}. This is because \texttt{RMLP} needs to reset the algorithm several times during the pricing process to achieve logarithm regret. On the other hand, our \texttt{OORMLP} uses a novel non-asymptotic martingale concentration to avoid resetting the algorithm during the whole pricing process and still achieves logarithm regret. Second, \texttt{RMLP} relies on an i.i.d. noise assumption, while \texttt{OORMLP} allows for a more flexible martingale difference noise. As shown in experiments in Section \ref{sec:simulation}, \texttt{OORMLP} is more sample efficient and robust to noise assumptions.
\end{remark} 

\section{Experiments}\label{sec:simulation}

We evaluate the performance of the proposed \texttt{OORMLP} algorithm on both synthetic and real-world data. Additional simulations with dependent context sequence, sensitivity tests are provided in the supplement. 

\subsection{Simulations with independent context sequence}
\label{subsec:synt_data}

\begin{figure}[t!]
\centering
\includegraphics[width=\textwidth]{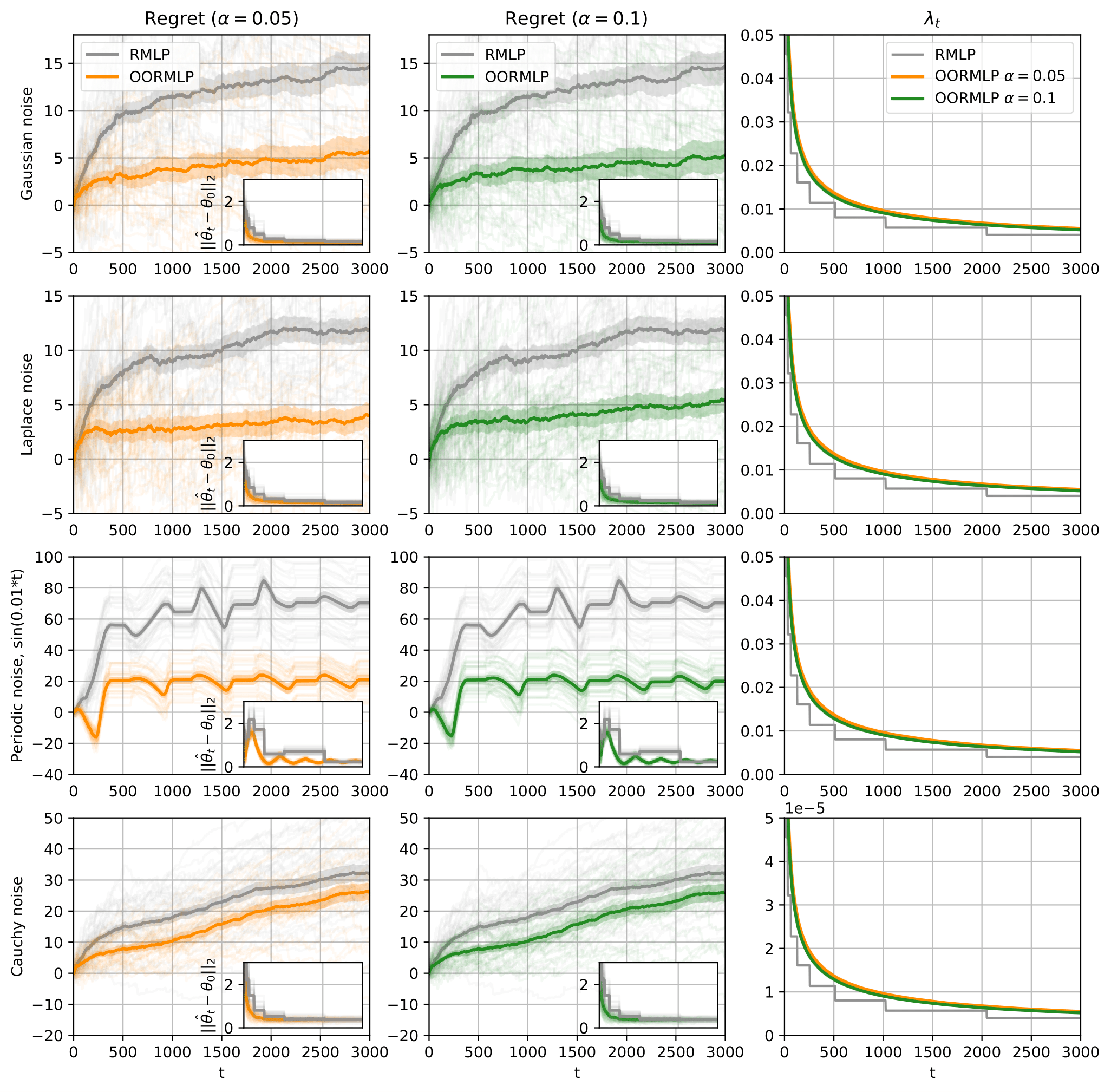}
\caption{Comparison between \texttt{RMLP} and \texttt{OORMLP} when $d=10$. 
         \textbf{First row:} $\eta_t \sim N(0,1)$. 
         \textbf{Second row:} $\eta_t \sim \text{Laplace}(0,1)$. 
         \textbf{Third row:} 
         $\eta_t = \sin(0.01 t)$.
         \textbf{Fourth row:} $\eta_t \sim \text{Cauchy}(0,1)$. 
         \textbf{Two columns on the left:} different choices of confidence budget $\alpha$. 
         \textbf{Rightmost column:} $\lambda_t$ for the experiments.
         \textbf{Small figures in each subfigure:} Estimation error $\|\hat\theta_t - \theta_0\|_2$. Each transparent line represents one experiment. The solid lines and error bars represent the sample mean and its standard deviation.} 
\label{fig:all_three_noises}
\end{figure}

We compare \texttt{OORMLP} with \texttt{RMLP} under four representative demand uncertainty settings: (i)  Gaussian ($\eta_t \sim N(0,1)$) (ii) Laplace ($\eta_t \sim \text{Laplace}(0,1)$) (iii) Periodic ($\eta_t = \sin(0.01 t)$) and (iv) Cauchy ($\eta_t \sim \text{Cauchy}(0,1)$). Settings (i) and (ii) stand for instances of log-concave distributions, where (ii) has a heavier tail than (i). Setting (iii) stands for an instance of time-series noise, where the noises between two adjacent time points are strongly dependent. Setting (iv) stands for distribution beyond the log-concave distribution assumed in our theoretical analysis. This setting investigates our algorithm under model misspecification. 
We set $\theta_0=(1,1,1,0,0,0,0,0,0,0)$ with $d=10$. Each entry in the product context vector $x_t \in \mathbb R^{10}$ is generated from $N(0,1)$ and truncated to $[-1, 1]$ (Similar synthetic data generation procedure is implemented by \cite{bastani2020online}). Therefore,  $\|x_t\|_\infty \leq 1$.

We implement our \texttt{OORMLP} algorithm at two confidence budgets ($\alpha= 0.05$ and $0.1$) which refer to different levels of pricing optimism, 
and compare our results with \texttt{RMLP} in \cite{javanmard2019dynamic}.
In real scenarios, we do not know the exact distribution of demand uncertainty in advance, and hence we design the pricing function $g(\cdot)$ by assuming the uncertainty is standard normal ($\eta_{t}\sim N(0,1)$). Such consideration tests the robustness of our algorithm when the demand uncertainty is unknown. Since $\|\theta_0\|_1=3$, we set $W=10$ for both \texttt{OORMLP} and \texttt{RMLP}. In practice, the theoretical online regularization choice in \eqref{eq:main_online_regularization} might be conservative. To compare the finite-time performance of \texttt{OORMLP} and \texttt{RMLP}, we scale the regularization sequence $\{\lambda_t\}_{t=1}^T$ of both methods by the same scaling parameter $c_{\lambda}=0.001$ (except for the Cauchy  noise setting where we use $c_{\lambda}=10^{-6}$ for both methods). 
We compute the mean and confidence interval of regrets over 32 replications. Figure \ref{fig:all_three_noises} reports the results for the regret, the estimation error, and the regularization sequence used, which show the superiority and robustness of our algorithm (See Remark \ref{rm:compare_RMLP}). 


Below we give general remarks and rationales of our \texttt{OORMLP} from the perspectives of variance control, sample efficiency, and regret reduction.
\begin{enumerate}[leftmargin=1mm]
    \item \textbf{Sample efficiency on estimation error process.} Small figures in each subfigure at Figure \ref{fig:all_three_noises} visualize the estimator error process of \texttt{RMLP} and \texttt{OORMLP}. In the first three uncertainty settings, \texttt{OORMLP} achieves smaller estimation errors than \texttt{RMLP}. This aligns with Remark \ref{rm:alg_design} that \texttt{OORMLP} is more sample efficient than \texttt{RMLP} since it avoids resetting the algorithm. Remarkably, the estimator accuracy of \texttt{RMLP} is especially fragile in the setting (iii) of periodic noise. This is because \texttt{RMLP} uses samples only from previous episodes and updates geometrically, and its estimation accuracy and pricing performance are impeded in a scenario where noises between two adjacent time points are strongly dependent. In contrast to \texttt{RMLP}, our \texttt{OORMLP} enjoys a superior design in terms of sample efficiency and robustness in such periodic noise settings. Finally, in setting (iv) of Cauchy noise which violates our log-concave noise assumption, \texttt{OORMLP} performs similarly to \texttt{RMLP}. 
    \item \textbf{Confidence budget and regret reduction.} Similar to the performance in the estimation error process, \texttt{OORMLP} achieves much smaller regrets than \texttt{RMLP} in the first three uncertainty settings.
    The first two columns in the first row of Figure \ref{fig:all_three_noises} show an interesting phenomenon that a larger confidence budget $\alpha$
    leads to a more substantial regret reduction of our \texttt{OORMLP}, while the performance of \texttt{RMLP} is not adaptive to $\alpha$. This aligns with our discussion in Section \ref{sec:oolasso} on how \texttt{OORMLP} balances the explore-exploit trade-off during the pricing process. 
\item \textbf{Shape of online regularization scheme.} The rightmost column of Figure \ref{fig:all_three_noises} visualizes how non-asymptotic martingale concentration arguments authorize a process-level online regularization scheme. Compared to \texttt{RMLP} which resets itself geometrically (when $t = 2^{k}, k \in \mathbb{N}$) without considering product feature uncertainty, our \texttt{OORMLP} delivers a smooth regularization process against both product context uncertainty and demand uncertainty.
    
\end{enumerate}

\begin{figure}[h!]
 \includegraphics[width=1.0\textwidth]{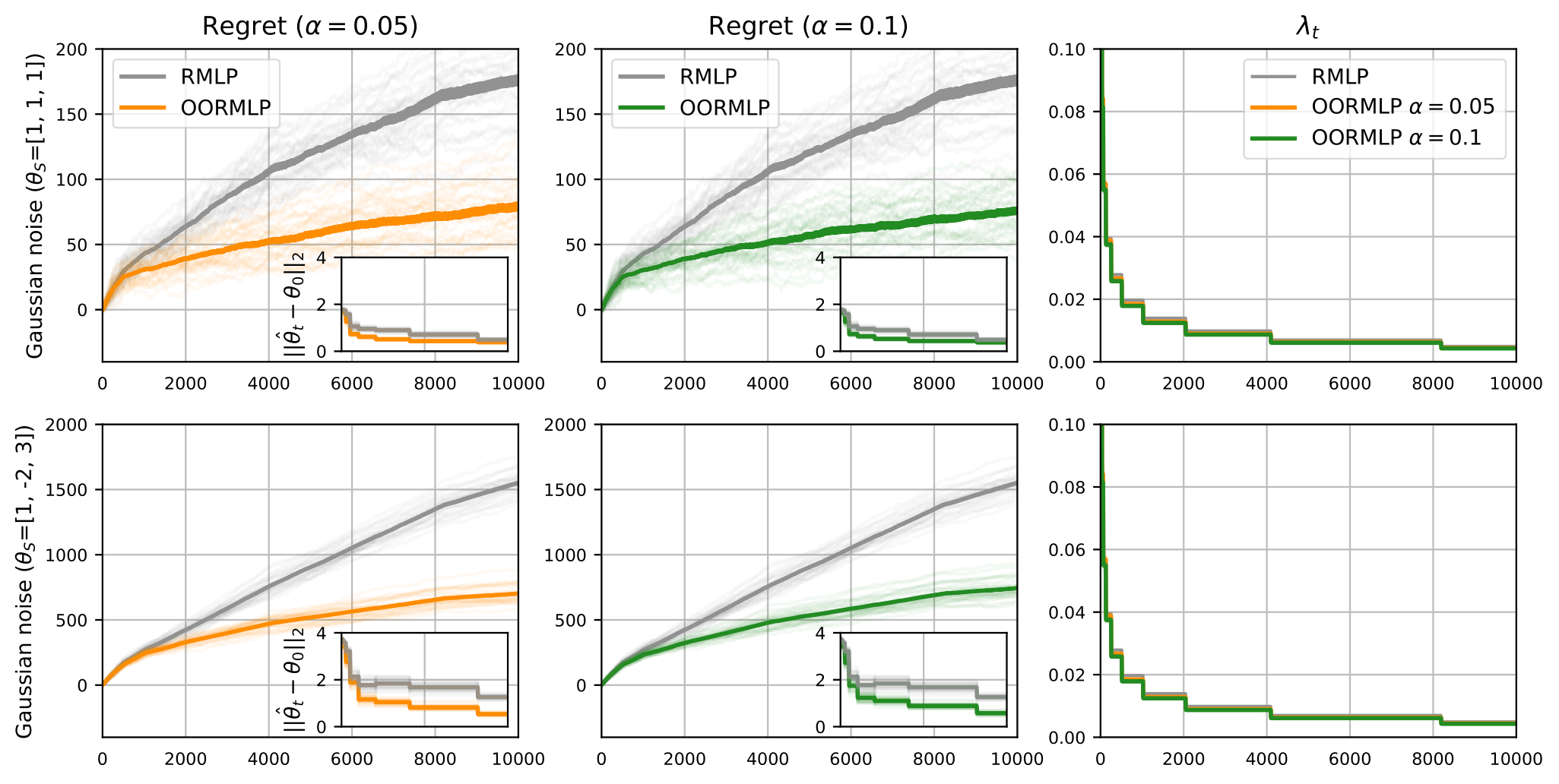}
  \caption{Comparison between \texttt{RMLP} and \texttt{OORMLP} when $d=1000$. $\eta_t \sim N(0,1)$. We use $\theta_S$ to denote the values in the support (the only non-zero entries) of $\theta_0$.
         \textbf{First row:} $\theta_S = (1,1,1)$. 
         \textbf{Second row:} $\theta_S = (1,-2,3)$. 
         Details for subfigures are the same as in Figure \ref{fig:all_three_noises}. 
         } 
  \label{fig:high_dim_simulation}
\end{figure}

For high-dimensional experiments, we set $d=1000$ and use the Gaussian noise setting  ($\eta_t \sim N(0,1)$) again. We consider two settings of the true demand parameter: $\theta_0 = (1,1,1,0,0,\cdots,0)$ and $\theta_0 = (1,-2,3,0,0,\cdots,0)$.
We use $c_{\lambda}=0.001$ and set $W=10$.  Here to save computation resources, in both this high-dimensional setting and the real data setting below, we update the estimation of \texttt{OORMLP} only at $t=2^k, k\in\mathbb{N}$ as in \texttt{RMLP}. 
Figure \ref{fig:high_dim_simulation} shows the results of $t\in[0,10000]$ over 32 replicates.
\texttt{OORMLP} performs better than \texttt{RMLP} even with the same number of estimation updates.
This regret reduction mainly comes from the larger sample size used by \texttt{OORMLP}. As mentioned in Remark \ref{rm:alg_design}, \texttt{RMLP} used a doubling trick to apply batch-type concentration results, while our result for \texttt{OORMLP} is based on a martingale concentration. Therefore, \texttt{RMLP} only updates its estimate using the most recent batch while \texttt{OORMLP} updates its estimate using all historical information. This means the sample size used by \texttt{OORMLP} is twice larger than that used in \texttt{RMLP}, and hence \texttt{OORMLP} is more sample efficient.

\subsection{Real data analysis on auto loan applications}
\label{subsec:real_data}

We demonstrate the efficiency of \texttt{OORMLP} in setting personalized lending rates for an online auto loan company in the United States. Personalization of prices in the lending industry is widely used and well-accepted. Our experiments are based on a real-life data set \textit{CPRM-12-001: On-Line Auto Lending} provided by the Center for Pricing and Revenue Management at Columbia University.
This database contains data on all 208,805 auto loan applications received by a major online lender in the United States between July 2002 and November 2004. The data collection contains the date on which prospective borrowers submitted an application, the sort of loan they requested (term and amount), and some personal information. Additionally, the data collection includes whether the online lender authorized the application, the annual percentage rate (APR) given and whether a contract was executed. In this context, clients' demand responses are binary, indicating whether or not a loan was agreed upon. This dataset was studied in many dynamic pricing literatures, e.g., \cite{phillips2015effectiveness}, \cite{ban2021personalized}, \cite{bastani2021meta}.  

A summary of the data set (with descriptive statistics on the demand and available features) is shown in the Table 3 in \cite{ban2021personalized}. 
The column ``apply" is the binary demand indicator for eventual contract and is the response variable with value in $\{0,1\}$ for the market value model. 
There are 18 feature variables, both discrete categorical (e.g., type of financing, type of car, customer state) and continuous (e.g., FICO score, customer rate, competitor's rate). We prepossess the categorical variable to dummy variables and normalize the continuous variable to values with mean 0 and maximum absolute value 1.

This pricing problem is a special instance of the problem formulation in Section \ref{sec:choice_model}, with demand being a binary variable. In this situation, the price of a loan is determined by subtracting the loan amount from the net present value of future payments. Formally, we can calculate the price from the other variables in the dataset through
$p=\mathrm{Monthly}\ \mathrm{Payment} \times \sum_{\tau=1}^{\mathrm{Term}} (1+\mathrm{Rate})^{-\tau} - \mathrm{Loan}\ \mathrm{Amount}.$
Here, we use one thousand dollars as a basic unit for the price $p$. Also, note that the dimension of the variables in this dataset is $d=71$ since we construct dummy variables from the categorical variables.

In practice, it is hard to retrieve real-time feedback from clients on any dynamic pricing strategy until the pricing policy has been implemented in the data collection system. Thus, we apply off-policy learning used in \cite{ban2021personalized} to estimate the customer choice model using $\widehat{\theta} \equiv \arg\min \mathcal{L}(\theta)$ where $\mathcal{L}(\theta)$ is defined by (\ref{eq:self_info}) but across the entire dataset with the assumption $\eta_t$ being i.i.d. following $N(0,1)$. This optimization problem is the same as (\ref{eq:current_LASSO_program}) with $\lambda_t=0$ and $W=\infty$. We use (\ref{eq:sale_status_stoc_model}) with $\theta_0 = \widehat{\theta}$ as the ground truth model for generating the response of each consumer given any price. More specifically, to generate data from this model, we sample the covariates $x_t$ from the original dataset and $\eta_t$ from $N(0,1)$, then we calculate the market value $v_t$ and the response $y_t$ using (\ref{eq:Will_to_Pay_linear_model}) and (\ref{eq:sale_status_stoc_model}).

Similar to the simulation study in Section \ref{subsec:synt_data}, we design the pricing function $g(\cdot)$ by assuming the uncertainty is standard normal ($\eta_{t}\sim N(0,1)$). Since the ground truth model has $\|\theta_0\|_1 = 33.68$, we use $W=100$ as the upper bound for $\|\theta\|_1$ in the online estimation of $\theta$ for both \texttt{OORMLP} and \texttt{RMLP}. The scaling parameter is  $c_{\lambda}=0.00001$ and we update the estimation of $\hat\theta_t$ at $t=2^k, k\in\mathbb{N}$. We compare \texttt{OORMLP} to \texttt{RMLP} using experiment with $t\in[0,5000]$ over 32 replicates. Figure \ref{fig:autoloan_data} reports the results for the regret, the estimation error, and the regularization sequence used. 

\begin{figure}[h!]
\centering
\includegraphics[width=1.0\textwidth]{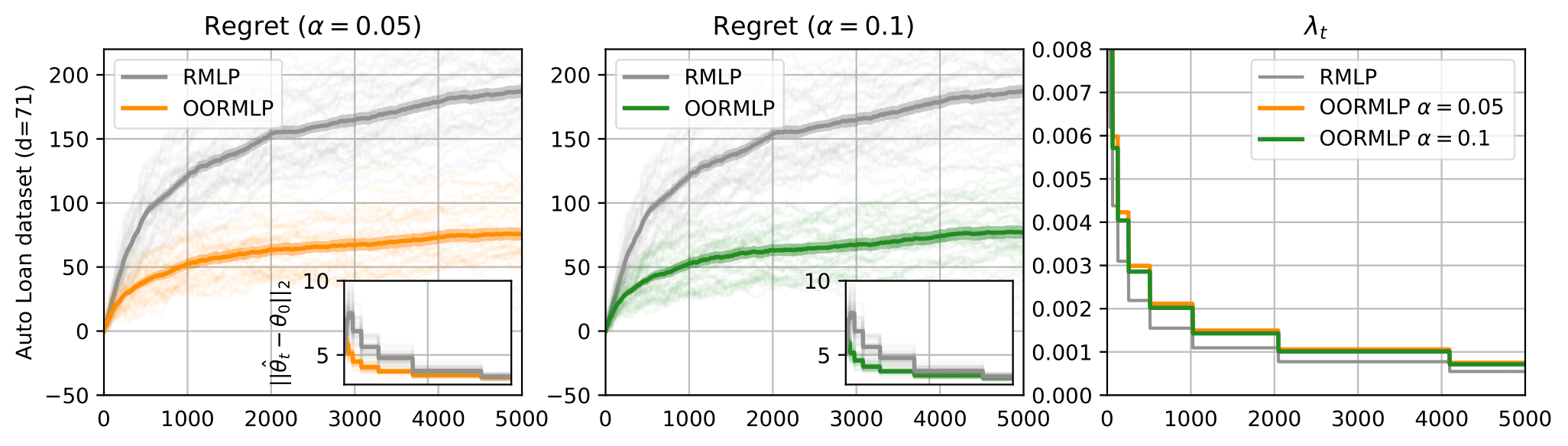}
\caption{Comparison between \texttt{RMLP} and \texttt{OORMLP} on the On-Line Auto Lending dataset. Details for subfigures are the same as in Figure \ref{fig:all_three_noises}.} 
\label{fig:autoloan_data}
\end{figure}

While both \texttt{OORMLP} and \texttt{RMLP} enjoy sublinear growth of regret, \texttt{OORMLP} obtains more accurate and stable estimation of $\theta$ and much less regret than \texttt{RMLP} at $T=5000$ time periods across all confidence budgets under similar regularization sequence $\{\lambda_t\}_{t=1}^{5000}$. This is consistent to our observation on the comparison results with synthetic data.
These results support that \texttt{OORMLP} enjoys substantial further regret reduction compared to \texttt{RMLP} and supports the claimed superiority of the proposed online regularization scheme.

\section*{Acknowledgment}
The authors thank the editor Professor Jane-Ling Wang, the associate editor and two anonymous reviewers for their valuable comments and suggestions which led to a much improved paper. Will Wei Sun's research was partially supported by NSF-SES grant (2217440). Guang Cheng's research was partially supported by ONR grant (N00014-22-1-2680) and NSF-SCALE MoDL grant (2134209). Any opinions, findings, and conclusions expressed in this material are those of the authors and do not reflect the views of the Office of Naval Research or the National Science Foundation. The authors report there are no competing interests to declare.


\baselineskip=13pt
\bibliographystyle{ims}
\nocite{*}
\bibliography{ref}

\end{document}